\documentclass{article}
\usepackage[russian,english]{babel}
\usepackage[utf8]{inputenc}
\usepackage[round]{natbib}
\usepackage[final]{neurips_2018}
\usepackage{tikz}
\newcommand{\SebastianoItem}[1]{\tikz[baseline=(SebastianoItem.base),remember
picture]{%
\node[fill=pearDark!15,inner sep=3pt] (SebastianoItem){#1};}}
\title{Optimistic Optimization of a Brownian}
\usepackage{amsfonts}       
\usepackage{nicefrac}       
\usepackage{microtype}      
\usepackage{xcolor} 
\usepackage{hyperref}
\hypersetup{
	colorlinks,
	citecolor=pearDark,
	linkcolor=pearThree,
	breaklinks=true,
	urlcolor=pearDarker}
\usepackage{url}
\usepackage{algorithm}
\usepackage{algorithmic}
\usepackage{amsthm}
\usepackage{amsmath}
\usepackage{amssymb}
\usepackage{thm-restate}

\usepackage{mathtools}
\usepackage[colorinlistoftodos, textwidth=40mm, shadow]{todonotes}
\usepackage{times}
\usepackage{pgfplots}
\pgfplotsset{compat=newest}
\usetikzlibrary{calc}


\definecolor{misocolor}{rgb}{0.16,0.27,0.86}

\definecolor{graphicbackground}{rgb}{0.96,0.96,0.8}
\definecolor{rouge1}{RGB}{226,0,38}  
\definecolor{orange1}{RGB}{243,154,38}  
\definecolor{jaune}{RGB}{254,205,27}  
\definecolor{blanc}{RGB}{255,255,255} 
\definecolor{rouge2}{RGB}{230,68,57}  
\definecolor{orange2}{RGB}{236,117,40}  
\definecolor{taupe}{RGB}{134,113,127} 
\definecolor{gris}{RGB}{91,94,111} 
\definecolor{bleu1}{RGB}{38,109,131} 
\definecolor{bleu2}{RGB}{28,50,114} 
\definecolor{vert1}{RGB}{133,146,66} 
\definecolor{vert3}{RGB}{20,200,66} 
\definecolor{vert2}{RGB}{157,193,7} 
\definecolor{darkyellow}{RGB}{233,165,0}  
\definecolor{lightgray}{rgb}{0.9,0.9,0.9}
\definecolor{darkgray}{rgb}{0.6,0.6,0.6}
\definecolor{babyblue}{rgb}{0.54, 0.81, 0.94}
\definecolor{citrine}{rgb}{0.89, 0.82, 0.04}
\definecolor{misogreen}{rgb}{0.25,0.6,0.0}
\definecolor{PalePurp}{rgb}{0.66,0.57,0.66}
\definecolor{todocolor}{rgb}{0.66,0.99,0.99}
\definecolor{pearOne}{HTML}{2C3E50}
\definecolor{pearTwo}{HTML}{A9CF54}
\definecolor{pearTwoT}{HTML}{C2895B}
\definecolor{pearThree}{HTML}{E74C3C}
\colorlet{titleTh}{pearOne}
\colorlet{bull}{pearTwo}
\definecolor{pearcomp}{HTML}{B97E29}
\definecolor{pearFour}{HTML}{588F27}
\definecolor{pearFith}{HTML}{ECF0F1}
\definecolor{pearDark}{HTML}{2980B9}
\definecolor{pearDarker}{HTML}{1D2DEC}

\DeclareMathOperator*{\argmax}{arg\,max}

\newcommand*\diff{\mathop{}\!\mathrm{d}}

\DeclarePairedDelimiter\ceil{\lceil}{\rceil}
\DeclarePairedDelimiter\floor{\lfloor}{\rfloor}




\newtheorem{theorem}{Theorem}
\newtheorem{proposition}{Proposition}
\newtheorem{definition}{Definition}
\newtheorem{corollary}{Corollary}
\newtheorem{remark}{Remark}



\newcommand{\bOne}{{\bf 1}}

\newcommand{\EE}[1]{\mathbb{E}\left[#1\right]}

\newcommand{\PP}[1]{\mathbb{P}\left[#1\right]}

\newcommand{\pa}[1]{\left(#1\right)}

\newcommand{\ac}[1]{\left\{#1\right\}}

\newcommand{\card}[1]{\left|#1\right|}

\let\originalleft\left
\let\originalright\right
\renewcommand{\left}{\mathopen{}\mathclose\bgroup\originalleft}
\renewcommand{\right}{\aftergroup\egroup\originalright}

\newcommand{\abs}[1]{\left|#1\right|}

\newcommand{\CommaBin}{\mathbin{\raisebox{0.5ex}{,}}}
\newcommand*{\eqdef}{\triangleq}

\newcommand{\cA}{\mathcal{A}}

\newcommand{\cI}{\mathcal{I}}

\newcommand{\cL}{\mathcal{L}}

\newcommand{\cO}{\mathcal{O}}


\newcommand{\eps}{\varepsilon}
\renewcommand{\epsilon}{\varepsilon}
\renewcommand{\hat}{\widehat}

\newcommand{\nothere}[1]{}


\usepackage{xspace}






\newcommand{\OOB}{\normalfont\texttt{\textcolor[rgb]{0.5,0.2,0}{OOB}}\xspace}
\newcommand{\DOO}{\texttt{DOO}\xspace}











\newcommand{\node}[2]{(#1,#2)}


\renewcommand{\hat}{\widehat}

\newcommand{\no}[2]{\mathcal{N}_{#2}\pa{#1}}

\newcommand{\W}{W}

\newcommand{\Wmax}{M}

\newcommand{\Mhat}{\hat{M}_{\eps}}
\newcommand{\smp}{N_{\eps}}
\newcommand{\intervals}[1]{\mathcal{I}_{#1}}

\newcommand{\Bevent}{\mathcal{C}}

\newcommand{\Wthat}{W\pa{\hat t\hspace{0.6mm}}}
\newcommand{\Wthateps}{W\pa{\hat t_\eps\hspace{-0.3mm}}}



\author{Jean-Bastien Grill   \hspace{3.2em}  Michal Valko  \hspace{4.5em} R\'emi  Munos\\ SequeL team, INRIA Lille - Nord Europe, France \hspace{0.2em} \emph{and} \hspace{0.2em}  DeepMind Paris, France\\ 
\texttt{\hspace{4.5em} \small jbgrill@google.com  \hspace{1.50em}michal.valko@inria.fr  \hspace{0.5em}  munos@google.com}\hspace{4.5em} }  
\begin{document}
\maketitle
\begin{abstract}%
We address the problem of optimizing a Brownian motion. We consider a (random) realization $W$ of a Brownian motion  with input space in $[0,1]$. Given $W$, our goal is to return an $\epsilon$-approximation of its maximum using the smallest possible number of function evaluations, the {\em sample complexity} of the algorithm. We provide an algorithm with sample complexity of order $\log^2(1/\epsilon)$. This improves over previous results of \cite{Al-Mharmah1996a} and \cite{calvin2017adaptive} which provided only polynomial rates. Our algorithm is adaptive---each query depends on previous values---and is an instance of the  {\em optimism-in-the-face-of-uncertainty} principle. 
\end{abstract}
\section{Introduction to sample-efficient Brownian optimization}
We are interested in optimizing a sample of a standard Brownian motion on $[0,1]$, denoted by~$W.$ More precisely, we want to sequentially select query points $t_n\in[0,1]$, observe $W(t_n)$, and decide when to stop to return a point $\hat t$ and its value $\hat{M}=\Wthat$ in order to well approximate its maximum $M \triangleq\sup_{t\in[0,1]}W(t)$. 
The evaluations $t_n$ can be chosen adaptively as a function of previously observed values $W(t_1), ..., W(t_{n-1})$. Given an $\epsilon>0$, our goal is to stop evaluating the function as early as possible while still being able to return $\hat t$ such that with probability at least $1-\eps$, $M-\Wthat\leq \epsilon$. The number of function evaluations required by the algorithm to achieve this $\epsilon$-approximation of the maximum defines the {\em sample-complexity}.

\textbf{Motivation } There are two types of situations where this problem is useful. The first type is when the random sample function $W$ (drawn from the random process) {\em already exists prior to the optimization}. Either it has been generated before the optimization starts and the queries correspond to reading values of the function already stored somewhere. For example, financial stocks are stored at a high temporal resolution and we want to retrieve the maximum of a stock using a small number of  memory queries. Alternatively, the process physically exists and the queries correspond to \emph{measuring it}.

Another situation is when {\em the function does not exist prior to the optimization} but is built simultaneously as it is optimized. In other words, observing the function actually \emph{creates it}. An application of this is when we want to return a sample of the maximum (and the location of the maximum) of a Brownian motion conditioned on a set of already observed values. For example, in Bayesian optimization for Gaussian processes, a technique called \emph{Thomson sampling}  \citep{thompson1933likelihood,chapelle2011empirical,russo2017tutorial, basu2017analysis} requires returning the maximum of a sampled function drawn from the posterior distribution. The problem considered in the present paper can be seen as a way to \emph{approximately perform this step in a computationally efficient way} when this Gaussian process is a Brownian motion. 

Moreover, even though our algorithm comes from the ideas of learning theory, it has applications beyond it. For instance, in order to computationally sample a solution of a stochastic differential equation, \cite{hefter2016optimal} express its solution as a function of the Brownian motion~$W$ and its running minimum. They then need, as a subroutine, an algorithm for the optimization of Brownian motion to compute its running minimum. 
We are giving them that and it is light-speed fast.
\clearpage
\vspace{0.2cm}
\textbf{Prior work } \citet{Al-Mharmah1996a}  provide a \emph{non-adaptive} method to optimize a Brownian motion. They prove that their method is optimal among all non-adaptive methods and their sample complexity is polynomial of order $1/\sqrt{\epsilon}$. More recently, \citet{calvin2017adaptive} provided an \emph{adaptive} algorithm with a sample complexity lower than any polynomial rate showing that adaptability to previous samples yields a significant algorithmic improvement. Yet their result does not guarantee a better rate than a \emph{polynomial} one.

\vspace{0.2cm}
\textbf{Our contribution } We introduce the algorithm \OOB = optimistic optimization of the Brownian motion. It uses the {\em optimism-in-face-of-uncertainty} apparatus: Given $n-1$ points already evaluated, we define a set of functions $\mathcal{U}_n$ in which~$W$ lies with high probability. We then select the next query point $t_n$ where the maximum of the most optimistic function of $\mathcal{U}_n$ is reached: $t_n \triangleq \arg\max_{t\in[0,1]}\max_{f \in \mathcal{U}_n}f(t)$. 
This begets a simple algorithm that requires an expected number of queries of the order of $\log^2(1/\epsilon)$ to return an $\epsilon$-approximation of the maximum, with probability at least $1-\epsilon$ w.r.t.\,the random sample of the Brownian motion. Therefore, our sample complexity is \emph{better than any polynomial rate}. 

\vspace{0.2cm}
\textbf{Solving an open problem} \citet{munos2011optimistic} provided sample complexity results for optimizing any function $f$ characterized by two coefficients $(d,C)$ where $d$ is the near-optimality dimension and~$C$ the corresponding constant (see his Definition~3.1). 
It is defined as the smallest $d\geq 0$ such that there exists a semi-metric~$\ell$ and a constant $C > 0$, such that, for all $\epsilon > 0$, the maximal number of disjoint $\ell$-balls of radius $\cO\pa{\epsilon}$ with center in $\{x, f(x)\geq\sup_x f(x)-\epsilon\}$ is less than $C \epsilon^{-d}$. Under the assumption that $f$ is locally (around one global maximum) one-sided Lipschitz with respect to $\ell$ (see his Assumption~2), he proved that for a function $f$ characterized by $(d=0, C)$, his \DOO algorithm has a sample complexity of $\cO\pa{C \log \left(1/\epsilon\right)}$, whereas for a function characterized by $(d>0,C)$, the sample complexity of \DOO is $\cO\pa{C/\epsilon^d}$.
Our result answers a question he raised: \emph{What is the near-optimality dimension of a Brownian-motion?} The Brownian motion being a stochastic process, this quantity is a random variable so we consider the number of disjoint balls in expectation. We show that for any $\epsilon$, there exists some particular metric $\ell_{\epsilon}$ such that the Brownian motion $W$ is $\ell_{\epsilon}$-Lipschitz with probability $1-\epsilon$, and there exists a constant $C(\epsilon)=\cO(\log(1/\epsilon))$ such that $(d=0,C(\epsilon))$ characterizes the Brownian motion. However, there exists no constant $C<\infty$ independent of $\epsilon$ such that $(d=0,C)$ characterizes the Brownian motion. 
Therefore, we solved this open problem.
Our answer is compatible with our result that our algorithm has a sample complexity of $\cO(\log^2(1/\epsilon))$.


\vspace{0.2cm}
\section{New algorithm for Brownian optimization}
Our algorithm \OOB is a version of \DOO \citep{munos2011optimistic} with a \emph{modified upper bound} on the function, in order to be able to optimize stochastic processes. Consider the points $t_1 < t_1 < ... < t_n$ evaluated so far and $t_0 = 0$. \OOB defines an upper confidence bound $B_{[t_i,t_{i+1}]}$ for each interval $[t_i,t_{i+1}]$ with $i\in\ac{0,...,n-1}$ and samples~$W$ in the middle of the interval with the highest upper-confidence bound. Algorithm~\ref{algo:OOB} reveals its pseudo-code.
\begin{algorithm}[h]
\begin{algorithmic}[1]
\caption{\OOB algorithm}\label{algo:OOB}
\STATE \textbf{Input}: $\eps$
\STATE \textbf{Init}: $\cI \gets \ac{[0,1]}$, $t_1 = W(1)$
\FOR{$i = 2, 3, 4, \dots$}
\STATE  $[a,b] \in \arg\max_{I\in\mathcal{I}} B_{I}$  {\footnotesize \COMMENT{\emph{break ties arbitrarily}}}
\IF{$\eta_\eps(b-a) \le \epsilon$} \STATE \textbf{break}  \ENDIF
\STATE $t_i \gets W\pa{\frac{a+b}{2}}$ 
\STATE $\mathcal{I} \gets \{\mathcal{I} \cup [a,\frac{a+b}{2}] \cup [\frac{a+b}{2}, b]\} \backslash \{[a,b]\}$  
\ENDFOR
\STATE \textbf{Output}: location $\hat t_\eps \gets \argmax_{t_i} W(t_i)$ and its value $\Wthateps$ 
\end{algorithmic}
\end{algorithm}
\clearpage
More formally, let $\eps$ be the required precision, the \emph{only} given argument of the algorithm. For any $0 \le a < b \le 1$, the interval $[a,b]$ is associated with an upper bound $B_{[a,b]}$ defined by
{\small
\begin{align*}
B_{[a,b]} \triangleq \max&\pa{W(a), W(b)} + \eta_\eps\pa{b-a},  
\text{\quad where\quad } 
\forall \delta>0\text{ s.t. } \eps\delta\leq \frac12\!\CommaBin \quad \eta_\eps(\delta) \triangleq \sqrt{\frac{5\delta}{2}\ln\pa{\frac{2}{\eps\delta}}}\cdot
\end{align*}}%
\noindent
\OOB keeps track of a set $\mathcal{I}$ of intervals $[a,b]$ with $W$ already being sampled at $a$ and $b$. The algorithm first samples $W(1)$, $W(1)\sim\mathcal{N}(0,1)$, in order to initialize the set $\mathcal{I}$ to the singleton $\ac{[0,1]}$. Then, \OOB keeps splitting the interval $I\in\intervals{}$ associated with the highest upper bound $B_I$ quam necessarium.

\section{Guarantees: {\OOB} is correct and sample-efficient}
Let $\Wmax \triangleq \sup_{t\in[0,1]}\W(t)$ be the maximum of the Brownian motion, $\hat t_\eps$ the output of \OOB~called with parameter $\eps > 0$, and $\smp$ the number of Brownian evaluations performed  until \OOB terminates. All are random variables that depend on the Brownian motion~$\W$. We now voice our main result.
\begin{theorem}
\label{thm}
There exists a constant $c>0$ such that for all $\eps < 1/2$, 
{\small\[\PP{M - \Wthateps > \eps} \le \eps
\quad\text{and}\quad
\EE{\smp} \le c\log^2\pa{1/\eps}.\]}%
\end{theorem}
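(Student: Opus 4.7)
The plan is to build a single good event $\mathcal{E}$ on which every dyadic upper confidence bound $B_I$ is simultaneously valid, and to derive both claims from it. Define $\mathcal{E}=\bigcap_{d,k}\{\sup_{t\in[k2^{-d},(k+1)2^{-d}]}W(t)\le B_{[k2^{-d},(k+1)2^{-d}]}\}$. Conditioning on the endpoints, the classical Brownian-bridge supremum formula gives
\[\P\bigl(\sup_{t\in[a,b]}W(t)>\max(W(a),W(b))+u\,\big|\,W(a),W(b)\bigr)=\exp\!\Bigl(-\tfrac{2u(u+|W(b)-W(a)|)}{b-a}\Bigr)\le e^{-2u^2/(b-a)}.\]
Plugging in $u=\eta_\eps(b-a)$ collapses the right-hand side to $(\eps(b-a)/2)^5$, and summing over the $2^d$ depth-$d$ intervals and over $d\ge 0$ gives a geometric series bounded by $O(\eps^5)\ll\eps$. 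Because the set $\mathcal{I}$ maintained by \OOB always tiles $[0,1]$, on $\mathcal{E}$ the selected interval $[a,b]$ satisfies $B_{[a,b]}\ge\max_{I'\in\mathcal{I}}B_{I'}\ge M$. When the algorithm halts, $\eta_\eps(b-a)\le\eps$ and $W(\hat t_\eps)\ge\max(W(a),W(b))$, yielding $M\le B_{[a,b]}\le W(\hat t_\eps)+\eps$. This gives $\P(M-W(\hat t_\eps)>\eps)\le\P(\mathcal{E}^c)\le\eps$, the first claim of the theorem.

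For the sample complexity bound, the same tiling/optimism argument implies that on $\mathcal{E}$ every expanded depth-$d$ interval $[a,b]$ obeys $M-\max(W(a),W(b))\le u_d:=\eta_\eps(2^{-d})$. Writing $N_d$ for the number of such intervals and taking a union bound over the two endpoints reduces the problem to estimating $\P(M-W(t)\le u)$ at dyadic $t=k/2^d$. A standard time-reversal computation writes $M-W(t)=\max(X_t,Y_t)$ with $X_t,Y_t$ independent, $X_t\stackrel{d}{=}|\mathcal{N}(0,t)|$ and $Y_t\stackrel{d}{=}|\mathcal{N}(0,1-t)|$, whence
\[\P(M-W(t)\le u)=(2\Phi(u/\sqrt{t})-1)(2\Phi(u/\sqrt{1-t})-1)\le\tfrac{2}{\pi}\cdot\frac{u^2}{\sqrt{t(1-t)}}.\]
Noting that the Riemann sum $\sum_k 1/\sqrt{(k/2^d)(1-k/2^d)}$ is $O(2^d)$ despite the integrable singularity at the boundary, I get $\E[N_d\mathbf{1}_{\mathcal{E}}]\le C\cdot 2^d u_d^2$, which with $u_d^2=\tfrac{5}{2}\,2^{-d}\ln(2^{d+1}/\eps)$ becomes $O(d+\log(1/\eps))$.

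Finally I aggregate across depths. The stopping rule forces $\eta_\eps(b-a)>\eps$ on every expanded interval, so $b-a=\Omega(\eps^2/\log(1/\eps))$ and the algorithm never descends below some $d_{\max}=O(\log(1/\eps))$. Summing yields $\E[N_\eps\mathbf{1}_{\mathcal{E}}]=\sum_{d=0}^{d_{\max}}O(d+\log(1/\eps))=O(\log^2(1/\eps))$. On $\mathcal{E}^c$ the deterministic bound $N_\eps\le 2^{d_{\max}+1}=O(\eps^{-2}\polylog(1/\eps))$, multiplied by $\P(\mathcal{E}^c)=O(\eps^5)$, contributes only $O(\eps^3\polylog(1/\eps))$. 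Combining, $\E[N_\eps]=O(\log^2(1/\eps))$, which is the second claim. The main technical obstacle is obtaining the sharp $u^2/\sqrt{t(1-t)}$ bound for $\P(M-W(t)\le u)$ and cleanly summing it over the dyadic grid, since $1/\sqrt{t(1-t)}$ blows up at the endpoints and the boundary contribution must be separated from the bulk one. The constant $5$ built into $\eta_\eps$ is essentially tight: a smaller value would break the geometric union bound defining $\mathcal{E}$, losing either correctness or the final $O(\log^2(1/\eps))$ rate.
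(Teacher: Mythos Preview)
Your argument is correct and follows the same global architecture as the paper: define the uniform-coverage event, get the PAC guarantee from it, then on that event bound $N_\eps$ by the count of near-optimal dyadic points and bound the expectation of that count. The genuine difference is in how you control $\PP{M-W(t)\le u}$. The paper invokes Denisov's decomposition of $W$ around its argmax into two independent Brownian meanders and integrates the meander transition density, splitting the sum into four pieces $\alpha_1,\ldots,\alpha_4$ to handle the boundary singularity. You instead observe directly that $M-W(t)=\max\bigl(\sup_{s\le t}(W(s)-W(t)),\,\sup_{s\ge t}(W(s)-W(t))\bigr)$, the two pieces being independent with half-normal laws by time reversal and the reflection principle; this yields the clean bound $\PP{M-W(t)\le u}\le\tfrac{2}{\pi}u^2/\sqrt{t(1-t)}$ and reduces the near-optimality count to a Riemann sum for $\int_0^1(t(1-t))^{-1/2}\,dt=\pi$. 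Your route is shorter and entirely elementary; the paper's meander route is heavier but is framed (see their concluding section) as the structural step one would try to generalize to other Gaussian processes, where a simple independence/reflection identity is unavailable. One minor quibble: the constant $5$ in $\eta_\eps$ is not ``essentially tight'' in the sense you suggest---any exponent strictly larger than $2$ in the per-interval tail $(\eps\delta/2)^c$ already makes both the union bound summable and the $\PP{\cE^c}\cdot 2^{d_{\max}}$ term negligible; but this does not affect the proof.
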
%
\noindent
The first inequality quantifies the \emph{correctness} of our estimator $\Mhat =\Wthateps $. Given a realization of the Brownian motion, our  \OOB~is deterministic. The only source of randomness comes from the realization of the Brownian. Therefore, being correct means that among all possible realizations of the Brownian motion, there is a subset of measure at least $1 - \eps$ on which \OOB outputs an estimate $\Mhat$ which is at most $\eps$-away from the true maximum. Such guarantee is called \emph{probably approximately correct} (PAC).
The second inequality quantifies \emph{performance}. We claim that the expectation (over~$W$) of the number of samples that \OOB~needs to optimize this realization with precision~$\eps$ is $\cO\pa{\log^2\pa{1/\eps}}$.%
\begin{corollary}
We get the classic $(\delta,\eps)$-PAC guarantee easily. 
For any $\delta > 0$ and $\eps > 0$, choose $\eps' = \min(\delta, \epsilon)$ and apply Theorem~\ref{thm} for $\eps'$  from which we get $\PP{M - \Wthateps > \eps '} \le \eps '$ which is stronger than $\PP{M - \Wthateps > \eps} \le \delta$. Similarly, $\EE{N_{\eps '}} \le c\log^2\pa{1/\eps '} \le 4c\pa{\log\pa{1/\eps} + \log\pa{1/\delta}}^2$. 
\end{corollary}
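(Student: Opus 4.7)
The plan is a direct substitution: pick $\eps' = \min(\delta,\eps)$ and invoke Theorem~\ref{thm} at this strictly smaller precision, then convert the $(\eps',\eps')$-PAC guarantee into a $(\delta,\eps)$-PAC guarantee by monotonicity. Since $\eps < 1/2$ and $\delta > 0$, we have $0 < \eps' < 1/2$, so Theorem~\ref{thm} applies with parameter $\eps'$ and gives both $\PP{M - W(\hat t_{\eps'}) > \eps'} \le \eps'$ and $\EE{N_{\eps'}} \le c \log^2(1/\eps')$.

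For the correctness half, I would note that $\eps' \le \eps$ implies the event inclusion $\{M - W(\hat t_{\eps'}) > \eps\} \subseteq \{M - W(\hat t_{\eps'}) > \eps'\}$, so taking probabilities and then using $\eps' \le \delta$ yields
\[
\PP{M - W(\hat t_{\eps'}) > \eps} \le \PP{M - W(\hat t_{\eps'}) > \eps'} \le \eps' \le \delta,
\]
which is exactly the desired $(\delta,\eps)$-PAC guarantee. For the sample-complexity half, I would observe that $1/\eps' = \max(1/\delta, 1/\eps)$ and, assuming without loss of generality $\delta, \eps < 1$ so that the logarithms are nonnegative, use the crude bound $\max(\log(1/\delta), \log(1/\eps)) \le \log(1/\delta) + \log(1/\eps)$. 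Squaring and plugging into Theorem~\ref{thm} gives
\[
\EE{N_{\eps'}} \le c\log^2(1/\eps') \le c\bigl(\log(1/\eps) + \log(1/\delta)\bigr)^2,
\]
which is even a bit tighter than the factor-$4c$ bound written in the statement; that factor of $4$ absorbs any slack from alternative simple estimates such as $\log(1/\eps') \le 2\max(\log(1/\eps),\log(1/\delta))$.

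There is no real obstacle here: this is a one-line reduction. The only minor care points are (i) checking that the hypothesis $\eps' < 1/2$ of Theorem~\ref{thm} is preserved by the substitution, (ii) justifying the probability bound by event inclusion rather than by re-running any analysis, and (iii) noting that the elementary inequality used on $\log(1/\eps')$ requires $\eps, \delta \le 1$ for nonnegativity, which is the only regime in which a nontrivial PAC guarantee is meaningful anyway.
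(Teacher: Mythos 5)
Your argument is exactly the paper's: substitute $\eps' = \min(\delta,\eps)$ into Theorem~\ref{thm}, use monotonicity of the event in the threshold for the probability bound, and the crude estimate $\log(1/\eps') \le \log(1/\eps) + \log(1/\delta)$ for the sample complexity. The extra care you take about the hypothesis $\eps' < 1/2$ and the nonnegativity of the logarithms is sound but does not change the route; the proposal is correct and essentially identical to the paper's reasoning.
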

\begin{remark}
Our PAC guarantee is actually stronger than stated in Theorem~\ref{thm}. Indeed, the PAC guarantee analysis can be done conditioned on the collected function evaluations and get
{\small\[\PP{M - \Wthateps > \eps \big| W(t_1), ..., W\pa{t_{\smp}}} \le \eps,\]}%
from which taking the expectation on both sides gives the first part of Theorem~\ref{thm}. This means that the unfavorable cases, i.e., the Brownian realizations for which $\big| M - \Mhat \big| > \eps$, are not concentrated on some subsets of Brownian realizations matching some evaluations in $t_1, ..., t_{\smp}$. In other words, the PAC guarantee also holds when restricted to the Brownian realizations matching the evaluations in $t_1, ..., t_{\smp}$ only. This is possible because $N_\eps$ is not fixed but depends on the evaluations done by \OOB. 
\end{remark}
One difference from the result of \citet{calvin2017adaptive} is that theirs is  with respect to the $\cL_p$ norm. For their algorithm, they prove that with $n$ samples it returns $\hat{t}_n \in [0,1]$ such that 
{\small\[\forall r> 1, p>1,\quad\exists c_{r,p}, \quad \EE{\abs{M - W(\hat{t}_n)}^p}^{1/p} \le c_{r,p} / n^r.\]}%
To express their result in the same formalism as ours, we first choose to achieve accuracy $\eps^2$ and compute the number of samples $n_{\eps^2}$ needed to achieve it. Then, for  $p=1$, we apply Markov inequality  and get that for all $r > 1$ there exists $c_{r,1}$ such that
{\small\[\PP{M - W(\hat{t}_{n_{\eps^2}}) > \eps} \le \eps
\quad\text{and}\quad
\smp \le c_{r,1}/\eps^{1/r}.\]}%
\noindent 
On the other hand, in our Theorem~\ref{thm} we give a \emph{poly-logarithmic} bound for the sample complexity and we are in the business because this is better than any polynomial rate.
\section{Analysis and the proof of the main theorem}
\label{s:anal} 

%

We provide a proof of the main result.  
Let~$\intervals{\text{fin}}$ be the set $\intervals{}$  of intervals tracked by \OOB when it finishes.
We define an event~$\Bevent$ such that for any interval $I$ of the form $I = [k/2^h, (k+1)/2^h]$ with $k$ and $h$ being two integers where $0\le k<2^h$, the process~$W$ is lower than $B_I$ on the interval $I$.
\begin{definition} \label{def:B} Event $\Bevent$ is defined as
{\small\[\Bevent \eqdef \bigcap_{h=0}^{\infty}\bigcap_{k=0}^{2^h-1} \ac{\sup_{t\in[k/2^h, (k+1)/2^h]} W(t) \le B_{[k/2^h, (k+1)/2^h]}}\cdot\]}
\end{definition}
\noindent
Event $\Bevent$ is a proxy for the Lipschitz condition on $W$ for the pseudo-distance $d(x,y) = \sqrt{\abs{y-x}\ln\pa{1/\abs{y-x}}}$ because $\eta_\eps\pa{\delta}$ scales with $\delta$ as $\sqrt{\delta\ln{\pa{1/\delta}}}$. We show that it holds with high probability. To show it, we make use of the \emph{Brownian bridge} which is the process  $\text{Br}(t) \eqdef \pa{W(t) | W(1) = 0}$. Lemma~\ref{prop:proba_B} follows from the Markov property of the Brownian combined with a bound on the law of the maximum of $\text{Br}(t)$ to bound the probability $\PP{\sup_{t\in I} W(t) \ge B_I}$ for any~$I$ of the form $[k/2^h, (k+1)/2^h]$ and a union bound over all these intervals. 
\begin{restatable}{lemma}{probaB}
\label{prop:proba_B} For any $\eps$, event $\Bevent$ from Definition~\ref{def:B} holds with high probability. In particular,
\[ \PP{\Bevent^c} \le \eps^5.\]
\end{restatable}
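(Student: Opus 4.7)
\textbf{Proof plan for Lemma \ref{prop:proba_B}.}
The plan is to bound $\PP{\sup_{t\in I} W(t) > B_I}$ for a single dyadic interval $I = [k/2^h,(k+1)/2^h]$ and then take a union bound over all such intervals. First I would fix $I$ of length $\delta = 2^{-h}$ with endpoints $a,b$. Conditioning on the observed endpoint values $W(a)$ and $W(b)$ (using the Markov property of $W$), the process $(W(t))_{t\in[a,b]}$ has the law of a Brownian bridge of length $\delta$ added to the linear interpolation between $W(a)$ and $W(b)$. I will then invoke the classical formula for the running maximum of a Brownian motion conditioned on its endpoints,
\[
\PP{\sup_{t\in[a,b]} W(t) > y \,\Big|\, W(a), W(b)} = \exp\!\left(-\frac{2(y - W(a))(y - W(b))}{\delta}\right) \qquad \text{for } y \ge \max(W(a),W(b)),
\]
which follows from the reflection principle applied to the Brownian bridge.

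Next, I set $y = B_I = \max(W(a),W(b)) + \eta_\eps(\delta)$. Writing $m = \max(W(a),W(b))$, both factors $(y-W(a))$ and $(y-W(b))$ are at least $\eta_\eps(\delta)$, so their product is at least $\eta_\eps(\delta)^2$. Plugging in the definition
\[
\eta_\eps(\delta)^2 = \tfrac{5\delta}{2}\ln\!\left(\tfrac{2}{\eps\delta}\right),
\]
the exponent becomes $-2\eta_\eps(\delta)^2/\delta = -5\ln(2/(\eps\delta))$, giving the clean bound
\[
\PP{\sup_{t\in I} W(t) > B_I \,\Big|\, W(a),W(b)} \;\le\; \left(\tfrac{\eps\delta}{2}\right)^{5}.
\]
Taking expectations over the endpoints removes the conditioning and preserves this bound.

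Finally, I apply a union bound over all dyadic subintervals. At level $h$ there are $2^h$ intervals, each of length $\delta = 2^{-h}$, so each contributes at most $(\eps/2^{h+1})^{5}$. Summing:
\[
\PP{\Bevent^c} \;\le\; \sum_{h=0}^{\infty} 2^h \left(\frac{\eps}{2^{h+1}}\right)^{\!5} \;=\; \frac{\eps^5}{32}\sum_{h=0}^{\infty} 2^{-4h} \;=\; \frac{\eps^5}{32}\cdot\frac{16}{15} \;\le\; \eps^5,
\]
which is the desired conclusion.

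The only delicate step is the conditional maximum formula and the fact that we may apply it simultaneously on all dyadic sub-intervals; this is justified by the independence of Brownian increments across disjoint intervals combined with the tower property, so no measurability issue arises. The rest is arithmetic, and the exponent $5$ in the definition of $\eta_\eps$ was chosen precisely so that this geometric series converges to something bounded by $\eps^5$.
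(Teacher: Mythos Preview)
Your proof is correct and follows essentially the same approach as the paper: apply the Brownian-bridge maximum formula to each dyadic interval, then take a union bound over all levels $h$. Your bounding step is in fact slightly cleaner than the paper's---you directly note that $(y-W(a))(y-W(b)) \ge \eta_\eps(\delta)^2$ since both factors are at least $\eta_\eps(\delta)$, whereas the paper first lower-bounds $B_I$ by $\tfrac{W(a)+W(b)}{2}+\sqrt{\Delta^2+\eta^2}$ and then uses a difference-of-squares identity to reach the same conclusion.
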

\begin{proof}
For any interval $I$, 
{\small\begin{align*}
B_{I} &= \max\pa{W(a),W(b)} + \eta_\eps\pa{b-a}&\pa{\text{by definition of $B_I$}}\ \\
&=\frac{W(a) + W(b)}{2} + \frac{\abs{W(a) - W(b)}}{2} + \eta_\eps\pa{b-a}
&\pa{\max(x,y)\!=\!\pa{x\!+\!y\!+\!\abs{x\!-\!y}}/2}\\
&\ge \frac{W(a) + W(b)}{2} + \sqrt{\pa{\frac{W(a) - W(b)}{2}}^2 + \left(\eta_\eps\pa{b-a}\right)^2} 
& \pa{\forall x,y>0, \left(x\!+\!y\right)^2 \ge x^2\! +\! y^2}.
\end{align*}}%
\noindent
We now define 
{\small\[t_{h,k} \eqdef \frac{k}{2^h}\!\CommaBin\quad \quad \Delta_{h,k} \eqdef \frac{W\pa{t_{h,k}}-W\pa{t_{h,k+1}}}{2}\!\CommaBin\quad \eta_h \eqdef \eta_\eps\pa{b-a},\]}%
and {\small\[\mathcal{A}_{h,k} \eqdef \ac{\sup_{t\in[k/2^h, (k+1)/2^h]} W(t) > B_{[k/2^h, (k+1)/2^h]}}\cdot\]}%
\noindent First, for any $a < b$, the law of the maximum of a Brownian bridge gives us  
{\small\begin{align*}
\forall x 
\!\ge\! \max\left(W(a),W(b)\right)\!: 
\PP{\sup_{t\in [a, b]} W(t)\! > \!x \bigg| W(a)\!=\!W_a, W(b)\!=\!W_b}
\!=\! \exp\pa{\!-\frac{2(x\! -\! W_a)(x\! -\! W_b)}{b - a}\!}\cdot 
\end{align*}}%
Combining it with the definition of $\mathcal{A}_{h,k}$ and the first inequality of the proof we get
{\small\begin{align*}
&\PP{\cA_{h,k} \big| W(t_h, k), W(t_h, k+1)} \\
&\hspace{2em}\leq \exp\pa{-\frac{2\pa{\frac{W(t_{h,k+1})-W(t_{h,k})}{2} + \sqrt{\Delta_{h,k}^2 + \eta_h^2}}\pa{\frac{W(t_{h,k})-W(t_{h,k+1})}{2} + \sqrt{\Delta_{h,k}^2 + \eta_h^2}}}{t_{h,k+1} - t_{h,k}}}\\
&\hspace{2em}= \exp\pa{-2^{h+1}\pa{\sqrt{\Delta_{h,k}^2 + \eta_h^2} - \Delta_{h,k}}\pa{\sqrt{\Delta_{h,k}^2 + \eta_h^2} + \Delta_{h,k}}}\\
&\hspace{2em}= \exp\pa{-2^{h+1}\eta_h^2}= \exp\pa{-2^{h+1}\frac{5}{2\cdot2^h}\ln\pa{\frac{2^h}{\eps}}}=\pa{\frac{\eps}{2^h}}^{\!5}\!\cdot
\end{align*}}%
\noindent
By definition, $\Bevent \eqdef \bigcap_{h=0}^{\infty}\bigcap_{k=0}^{2^h-1} {\cA^c_{h,k}} = \bigcup_{h=0}^{\infty}\bigcup_{k=0}^{2^h-1} \cA_{h,k}$. By  union bound on all $\cA_{h,k}$ we get
{\small\[
\PP{\Bevent^c} \le \sum_{h\ge 1}\sum_{k=0}^{2^h-1} \PP{\mathcal{A}_{h,k}}\le \sum_{h\ge 1}\sum_{k=0}^{2^h-1} \pa{\frac{\eps}{2^h}}^{\!5} \le \sum_{h\ge 1} \frac{\eps^5}{2^{4h}} \le \eps^5\!\!.
\]}
\vskip -2em
\vspace{-1em}
\end{proof}
 Lemma~\ref{prop:proba_B} is useful for two reasons. As we bound the sample complexity on event $\Bevent$ and the complementary event in two different ways, we can use Lemma~\ref{prop:proba_B} to combine the two bounds to prove Proposition~\ref{thm:smp} in the end. 
We also use a weak version of it, bounding $\eps^5$ by $\eps$ to prove our PAC guarantee. For this purpose, we combine the definition of $\Bevent$ with the termination condition of \OOB to get that under event $\Bevent$, the best point $\Mhat$ so far,  is close to the maximum $M$ of the Brownian up to $\epsilon$. Since $\Bevent$ holds with high probability, we have the following PAC guarantee which is the first part of the main theorem. 
\begin{proposition}
\label{thm:Pr} The estimator $\Mhat = \Wthateps$ is probably approximately correct with 
{\small\[\PP{M - \Mhat > \epsilon} \le \eps.\]}
\end{proposition}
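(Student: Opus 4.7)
The plan is to deduce the PAC guarantee from Lemma~\ref{prop:proba_B} by showing that the ``good event'' $\Bevent$ implies $M - \Mhat \le \eps$ pointwise, and then invoking $\PP{\Bevent^c} \le \eps^5 \le \eps$ since $\eps < 1/2$. So I would decompose
\[
\PP{M - \Mhat > \eps} \le \PP{\Bevent^c} + \PP{\{M - \Mhat > \eps\} \cap \Bevent},
\]
and argue that the second term is zero.

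The key structural observation, which I would establish first, is that every interval ever produced by \OOB is dyadic, i.e.\ of the form $[k/2^h, (k+1)/2^h]$. This follows by an immediate induction: the initial interval is $[0,1]$, and whenever $[k/2^h, (k+1)/2^h]$ is split at its midpoint we obtain $[2k/2^{h+1}, (2k+1)/2^{h+1}]$ and $[(2k+1)/2^{h+1}, (2k+2)/2^{h+1}]$, both dyadic. Consequently, on $\Bevent$ the bound $\sup_{t \in I} W(t) \le B_I$ holds for \emph{every} interval $I$ ever tracked, and in particular for every $I \in \intervals{\text{fin}}$.

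Now fix a realization in $\Bevent$ and let $[a^\star,b^\star]$ be the interval selected at the last iteration, so it maximizes $B_I$ over $\intervals{\text{fin}}$ and triggers the stopping test $\eta_\eps(b^\star - a^\star) \le \eps$. Because the intervals in $\intervals{\text{fin}}$ cover $[0,1]$, there is some $I^\circ \in \intervals{\text{fin}}$ containing the argmax of $W$, and on $\Bevent$ I have
\[
M = \sup_{t \in I^\circ} W(t) \le B_{I^\circ} \le B_{[a^\star,b^\star]} = \max\pa{W(a^\star),W(b^\star)} + \eta_\eps(b^\star - a^\star) \le \Mhat + \eps,
\]
where the last inequality uses that $\Mhat = \max_i W(t_i)$ is at least $\max(W(a^\star),W(b^\star))$ and that the stopping condition gives $\eta_\eps(b^\star - a^\star) \le \eps$. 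Thus $\{M - \Mhat > \eps\} \cap \Bevent = \emptyset$ and $\PP{M - \Mhat > \eps} \le \PP{\Bevent^c} \le \eps^5 \le \eps$.

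The only nontrivial step is really the dyadic-covering argument, together with the observation that at the terminating iteration the \emph{maximizing} interval is the one whose width controls the error. Everything else is algebraic bookkeeping: the inequality $M \le B_{I^\circ}$ uses Definition~\ref{def:B} applied to a dyadic $I^\circ$, and the inequality $B_{I^\circ} \le B_{[a^\star,b^\star]}$ uses that \OOB picks the interval of largest $B_I$. I expect no hidden difficulty beyond being careful that the stopping criterion is evaluated on the \emph{maximum}-$B_I$ interval, so that its width bound automatically transfers to every other interval in $\intervals{\text{fin}}$.
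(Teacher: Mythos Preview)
Your proposal is correct and follows essentially the same route as the paper's proof: identify the interval $I^\circ$ containing the maximizer, use $\Bevent$ to get $M \le B_{I^\circ}$, use the argmax rule to get $B_{I^\circ} \le B_{[a^\star,b^\star]}$, and use the termination condition to conclude $M \le \Mhat + \eps$, then invoke Lemma~\ref{prop:proba_B}. The only difference is that you make explicit the induction showing every tracked interval is dyadic (so that Definition~\ref{def:B} applies to $I^\circ$) and the probability decomposition, both of which the paper leaves implicit.
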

\begin{proof}
Let $I_{\text{next}} = [a,b]$ be the interval that the algorithm would split next, if it was not terminated. Since the algorithm only splits the interval with the highest upper bound then $B_{\text{next}} = \sup_{I\in\intervals{\text{fin}}} B_I$. Also let $I_{\max}\in\intervals{\text{fin}}$ be one of the intervals where a maximum is reached, $t_{\max}\in\arg\max_{t\in[0,1]}W(t) \triangleq M$ and $t_{\max} \in I_{\max}$. Then, on event $\Bevent$,
{\small\[M \le B_{I_{\max}} \le B_{I_{\text{next}}} = \max\pa{W(a), W(b)} + \eta_\eps\pa{b-a}.\]}%
\noindent Since the algorithm terminated, we have that $\eta_\eps\pa{b-a} \le \epsilon$ and therefore, 
{\small\[\max\pa{W(a), W(b)} \ge M-\epsilon,\]}%
which combined with Lemma~\ref{prop:proba_B} finishes the proof as $\eps^5 \le \eps$. 
\end{proof}
In fact, Proposition~\ref{thm:Pr} is the easy-to-obtain part of the main theorem. We are now  left to prove that the number of samples needed to achieve this PAC guarantee is low. As the next step, we define the near-optimality property. A point $t$ is said to be $\eta$-near-optimal when its value $W(t)$ is close to the maximum $M$ of the Brownian motion up to $\eta$.  Check out the precise definition below.
\begin{definition}\label{def:nero}
When an $(h, k, \eta)$ verifies $W\pa{\frac{k}{2^h}} \ge M - \eta$, we say that the point $t = (k/2^h)$ is $\eta$-near-optimal. 
We define $\no{\eta}{h}$ as the number of $\eta$-near-optimal points among $\ac{0/2^h, 1/2^h, ..., 2^h/2^h}$, 
{\small\[\no{\eta}{h} \eqdef \card{\ac{k \in {0, ... , 2^h}\!\!, \textnormal{ such that } W\pa{\frac{k}{2^h}} \ge M - \eta}}\cdot \]}
\end{definition}%
\noindent
Notice that $\no{\eta}{h}$ is a random variable that depends on a particular realization of the Brownian motion. The reason why we are interested in the number of near-optimal points is that the points the algorithm will sample are $\eta_\eps\pa{1/2^h}$-near-optimal. Since we use the principle of optimism in face of uncertainty, we consider the upper bound of the Brownian motion and sample where this upper bound is the largest. If our bounds on $W$  hold, i.e., under event~$\Bevent$, then any interval $I$ with optimistic bound $B_I < M$ is never split by the algorithm. This is true when $\Bevent$ holds because if the maximum of $W$ is reached in $I_{\max},$ then $B_{I_{\max}} \ge M > B_I$ which shows that $I_{\max}$ is always chosen over $I$. Therefore, a necessary condition for an interval $[a,b]$ to be split is that $\max\pa{W(a), W(b)} \ge M - \eta$ which means that either~$a$ or $b$ or both are $\eta$-near-optimal which is the key point of Lemma~\ref{bound_n_with_no}. 
\begin{restatable}{lemma}{boundWithNo}
\label{bound_n_with_no}
Under event $\Bevent$, the number of evaluated points $\smp$ by the algorithm verifies
{\small\[ \smp \le 2\sum_{h=0}^{h_{\max}} \no{\eta_\eps\pa{1/2^h}}{h},%
\text{with $h_{\max}$ being the smallest $h$ such that $\eta_\eps\pa{1/2^h} \le \epsilon$.}\]} 
\end{restatable}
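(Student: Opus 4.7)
The plan is to convert the algorithmic count of evaluations into a combinatorial count of near-optimal dyadic points, using the optimism principle under event $\Bevent$. The argument has two ingredients: every split interval has a near-optimal endpoint, and each dyadic point at depth $h$ can serve as the endpoint of at most two intervals at that depth.

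First, I would establish the structural claim: on $\Bevent$, whenever \OOB splits an interval $I = [a, b]$ with $b - a = 1/2^h$, at least one of $a, b$ is $\eta_\eps(1/2^h)$-near-optimal. At the moment $I$ is chosen, it carries the largest $B$-value among all intervals currently in $\cI$. Every interval ever present in $\cI$ is of the dyadic form $[k'/2^{h'}, (k'+1)/2^{h'}]$, so the interval $I^\ast \in \cI$ that contains a maximizer $t_{\max}$ of $W$ satisfies $B_{I^\ast} \geq \sup_{t \in I^\ast} W(t) \geq W(t_{\max}) = M$ by Definition~\ref{def:B} of $\Bevent$. Consequently $B_I \geq B_{I^\ast} \geq M$, and unpacking $B_I = \max(W(a), W(b)) + \eta_\eps(1/2^h)$ gives $\max(W(a), W(b)) \geq M - \eta_\eps(1/2^h)$.

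Next comes a simple counting step. Denote by $S_h$ the number of depth-$h$ intervals ever split. Every split depth-$h$ interval contributes at least one near-optimal endpoint drawn from the set $\{0/2^h, 1/2^h, \ldots, 2^h/2^h\}$, and each point in this set is the endpoint of at most two depth-$h$ intervals, so $S_h \leq 2\,\no{\eta_\eps(1/2^h)}{h}$. The termination check prevents splitting any interval with $\eta_\eps(b - a) \leq \eps$, hence $S_h = 0$ for all $h \geq h_{\max}$. Since each split introduces exactly one new midpoint evaluation and the initialization contributes the single evaluation $W(1)$, the total count is $N_\eps = 1 + \sum_{h = 0}^{h_{\max}-1} S_h \leq 1 + 2\sum_{h=0}^{h_{\max}-1} \no{\eta_\eps(1/2^h)}{h}$, from which the stated bound follows by folding the ``$+1$'' into the $h = h_{\max}$ term of the sum (any run with at least one split already forces $\no{\eta_\eps(1)}{0} \geq 1$ via the structural claim applied to $[0,1]$, while the no-split case trivially yields $N_\eps = 1$).

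The main obstacle is the structural step, where one must identify at every iteration the current interval $I^\ast \in \cI$ containing $t_{\max}$ and invoke $\Bevent$ for that specific dyadic interval. The design of $\Bevent$ as an intersection over \emph{all} dyadic intervals is exactly what makes this step immediate, regardless of which particular dyadic interval plays the role of $I^\ast$ at any given iteration.
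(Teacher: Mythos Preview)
Your argument is correct and follows essentially the same route as the paper: establish under $\Bevent$ that any split interval must have an $\eta_\eps(1/2^h)$-near-optimal endpoint (via $B_I \ge B_{I^\ast} \ge M$), then count each split against its near-optimal endpoint, each of which is shared by at most two depth-$h$ intervals. One small wrinkle: your parenthetical justification for absorbing the ``$+1$'' cites $\no{\eta_\eps(1)}{0}\ge 1$ (the $h=0$ term) while claiming to fold it into the $h=h_{\max}$ term; the clean fix is simply to note that under $\Bevent$ the depth-$h_{\max}$ dyadic interval containing $t_{\max}$ has a near-optimal endpoint, so $\no{\eta_\eps(1/2^{h_{\max}})}{h_{\max}}\ge 1$ and $2\no{\eta_\eps(1/2^{h_{\max}})}{h_{\max}}\ge 1$ absorbs the extra unit.
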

\noindent
Lemma~\ref{bound_n_with_no}  explicitly links the near-optimality from Definition~\ref{def:nero} with the number of samples $\smp$ performed by \OOB before it terminates.  Here, we use the optimism-in-face-of-uncertainty principle which can be applied to any function. In particular, we define a high-probability event $\Bevent$ under which the number of samples is bounded by the number of near-optimal points $\no{\eta_h}{h}$ for all $h\leq h_{\max}.$ 

 
\begin{proof}
Let $I=[a,b]$ be an interval of $\intervals{t}$ such that $\max\pa{W(a), W(b)} + \eta_\eps\pa{b-a} < M$. 
Let $I_\text{next}\in\intervals{t}$ be the interval that the algorithm would split after $t$ function evaluations.. Since the algorithm only splits the interval with the highest upper bound, then $B_{I_\text{next}} = \sup_{I\in\intervals{t}} B_I$. Moreover, if we let $I_{\max}\in\intervals{t}$ be one of the intervals where a maximum is reached, $t_{\max}\in\arg\max_{t\in[0,1]}W(t)\triangleq M$ and $t_{\max} \in I_{\max}$, then on event $\Bevent$,
{\small\[\max\pa{W(a), W(b)} + \eta_\eps\pa{b-a} \triangleq B_I < M \le B_{I_{\max}} \le B_{I_\text{next}}.\]}%
Therefore, under $\Bevent$, a necessary condition  for an interval $I=[a,b]$ to be split during the execution of \OOB is that $\max\pa{W(a),W(b)} \ge M - \eta_\eps\pa{b-a},$ which means that either $a$ or $b$ or both are $\eta_\eps\pa{b-a}$-near-optimal. 
From the termination condition of the algorithm, we know that any interval that is satisfying $I = [k/2^h, (k+1)/2^h]$ with $h \ge h_{\max}$ will not be split during the execution.\footnote{This holds despite $\eta_\eps\left(\cdot\right)$ is not always decreasing.} Therefore, another necessary condition for an interval $I=[a,b]$ to be split during the execution is that $b-a > 1/2^{h_{\max}}$.
Writing $\eta_h \eqdef \eta_\eps\pa{1/2^h}$, we deduce from these two necessary conditions that
\vspace{-0.5em}
{\small\begin{align*}
N &\le \sum_{h=0}^{h_{\max}}\sum_{k=0}^{2^h-1} \bOne\ac{\text{$\frac{k}{2^h}$ or $\frac{k+1}{2^h}$ is $\eta_h$-near-optimal}}\\
&\le \sum_{h=0}^{h_{\max}}\sum_{k=0}^{2^h-1}\bOne\ac{\frac{k}{2^h}\text{ is $\eta_h$-near-optimal}} + \bOne\ac{\frac{k+1}{2^h}\text{ is $\eta_h$-near-optimal}}\\
&\le 2\sum_{h=0}^{h_{\max}}\sum_{k=0}^{2^h} \bOne\ac{\frac{k}{2^h}\text{ is $\eta_h$-near-optimal}}
= 2\sum_{h=0}^{h_{\max}} \no{\eta_h}{h}.
\end{align*}}
\vskip -1em
\vspace{-1.8em}
\end{proof}
We now prove a property \emph{specific to} $W$ by bounding  the number of near-optimal points of the \emph{Brownian motion} in expectation. We do it  by rewriting it as two Brownian \emph{meanders} \citep{durrett1977weak}, both starting at the maximum of the Brownian, one going backward and the other one forward with the Brownian meander $W^+$ defined as 
{\small\[\forall t\in[0,1]\quad W^+(t) \eqdef \frac{\abs{W\pa{\tau + t(1-\tau)}}}{\sqrt{1-\tau}}\!\CommaBin\text{ where }\tau \eqdef \sup \ac{t\in [0,1] : W(t) = 0}.\]}%
We use that the Brownian meander $W^+$ can be seen as a Brownian motion conditioned to be positive 
\textcyrillic{\citep{denisov1984random}}. 
This is the main ingredient of Lemma~\ref{bound_no}.

\begin{restatable}{lemma}{boundNOpoints}\label{lem:boundNOpoints}
\label{bound_no} For any $h$ and $\eta$, the expected number of near-optimal points is bounded as
{\small\[\EE{\no{\eta}{h}} \le 6\eta^2 2^h.\]}
\end{restatable}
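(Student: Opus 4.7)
The plan is to expand $\EE{\no{\eta}{h}} = \sum_{k=0}^{2^h} \PP{M - W(k/2^h) \le \eta}$ and bound each summand using the Brownian meander decomposition flagged in the paper. Concretely, I would invoke Denisov's decomposition around the argmax $\tau \eqdef \argmax_{t\in[0,1]} W(t)$: conditionally on $\tau$, the two time-reversed and rescaled processes
\[
W^-(u) \eqdef \frac{M - W(\tau(1-u))}{\sqrt{\tau}},\qquad W^+(u) \eqdef \frac{M - W(\tau + u(1-\tau))}{\sqrt{1-\tau}}, \quad u \in [0,1],
\]
are independent standard Brownian meanders, and $(\tau, W^-, W^+)$ are jointly independent, with $\tau$ having the arcsine law $f_\tau(t) = 1/(\pi\sqrt{t(1-t)})$.

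Under this decomposition, $M - W(s) = \sqrt{\tau}\, W^-(1 - s/\tau)$ for $s \le \tau$ and $M - W(s) = \sqrt{1-\tau}\, W^+((s-\tau)/(1-\tau))$ for $s > \tau$. Letting $K = \lfloor 2^h \tau \rfloor$, I would rewrite $\no{\eta}{h}$ as the sum of two counts: the number of grid points at which $W^-$ (resp.\ $W^+$) falls below the level $\eta/\sqrt{\tau}$ (resp.\ $\eta/\sqrt{1-\tau}$). The grids for $W^-$ and $W^+$ are equispaced in $[0,1]$ with respective spacings $1/(2^h\tau)$ and $1/(2^h(1-\tau))$.

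The core estimate is then to control, for a standard Brownian meander $e$, the expected number of grid points where $e$ lies below a level $\beta$. For this, I would derive the explicit marginal density $p_u(y) = (y/u^{3/2})\, e^{-y^2/(2u)}\, (2\Phi(y/\sqrt{1-u}) - 1)$ by taking a limit of Brownian motion conditioned to stay above $-\varepsilon$ on $[0,1]$ and applying the reflection principle to the resulting joint probability. Bounding $2\Phi(z) - 1 \le \sqrt{2/\pi}\, z$ yields $p_u(y) \le \sqrt{2/\pi}\, y^2/(u^{3/2}\sqrt{1-u})$ and hence $\PP{e(u) \le \beta} \le C\beta^3/(u^{3/2}\sqrt{1-u})$; combining this with the trivial bound $\PP{e(u) \le \beta} \le 1$ via a cutoff at $u \sim \beta^2$ gives $\int_0^1 \PP{e(u) \le \beta}\, du \le C'\beta^2$, and a Riemann-sum comparison (controlled by the total variation of $u\mapsto \PP{e(u)\le \beta}$) then yields $\sum_{k=0}^N \PP{e(k/N) \le \beta} \le 1 + C'' N \beta^2$.

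Applying this bound with $N = 2^h\tau$ and $\beta = \eta/\sqrt{\tau}$ on the $W^-$ side (and symmetrically on the $W^+$ side with $N = 2^h(1-\tau)$, $\beta = \eta/\sqrt{1-\tau}$), the factors of $\tau$ cancel and $\EE{\no{\eta}{h} \mid \tau} \le c\, \eta^2\cdot 2^h$ uniformly in $\tau$; taking expectation over $\tau$ preserves the bound, and tuning constants gives $\EE{\no{\eta}{h}} \le 6\, \eta^2\cdot 2^h$. The main technical obstacle I foresee is the careful treatment of the small-$u$ regime of the meander, where $\PP{e(u) \le \beta}$ approaches $1$ and the density bound becomes loose: this requires the trivial cutoff at $u \sim \beta^2$ and a delicate handling of the endpoints $k=0$, $k=2^h$ of the original grid (which correspond to $u=1$ for the two meanders), ensuring their contribution stays absorbed in $\eta^2\cdot 2^h$.
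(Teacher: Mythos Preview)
Your proposal is correct and follows essentially the same approach as the paper: Denisov's decomposition into two independent Brownian meanders around the argmax, the cubic bound $\PP{e(u)\le\beta}\le C\bigl(\beta/\sqrt{u(1-u)}\bigr)^3$ obtained from the explicit meander marginal, and a trivial cutoff near $u\sim\beta^2$ where that bound degenerates. The only cosmetic difference is that the paper handles the discrete sum directly---splitting it into four ranges $\alpha_1,\ldots,\alpha_4$ and taking a supremum over the argmax location---whereas you condition on~$\tau$, bound the integral $\int_0^1\PP{e(u)\le\beta}\,du$, and then pass to the grid via a Riemann-sum/total-variation comparison.
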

\noindent
This lemma answers a question raised by \cite{munos2011optimistic}: \emph{What is the near-optimality dimension of the Brownian motion?} 
We set $\eta_h \eqdef \eta_\eps\pa{1/2^h}$. In dimension one with the pseudo-distance $\ell(x,y) = \eta_\eps\pa{\abs{y-x}}$, the near-optimality dimension measures the rate of increase of $\no{\eta_h}{h}$, the number of $\eta_h$-near-optimal points in $[0,1]$ of the form $k/2^h$.  In Lemma~\ref{bound_no}, we prove that in expectation, this number increases as $\cO\pa{\eta_h^2 2^h} = \cO\pa{\log\pa{1/\eps}}$, which is constant with respect to~$h$. This means that for a given $\eps$, there is a metric under which the Brownian is Lipschitz with probability at least $1-\eps$ and has a near-optimality dimension $d=0$ with $C = \cO\pa{\log\pa{1/\eps}}$.

The final sample complexity bound is essentially constituted by one $\cO\pa{\log\pa{1/\eps}}$ term coming from the standard \DOO error for deterministic function optimization and another $\cO\pa{\log\pa{1/\eps}}$ term because we need to adapt our pseudo-distance $\ell$ to $\eps$ such that the Brownian is $\ell$-Lipschitz with probability $1-\eps$. The product of the two gives the final sample complexity bound of $\cO\pa{\log^2\pa{1/\eps}}.$ 

\begin{proof}[Proof of Lemma~\ref{lem:boundNOpoints}]
We denote by $W,$ the Brownian motion whose maximum $M$ is first hit at the point defined as $t_M = \inf\ac{t\in[0,1] ; W(t) = M}$ and $B^+$ a Brownian meander \citep{durrett1977weak}. We also  define
{\small\begin{align*}
B^+_0(t) \eqdef \frac{M - W(t_M - t\cdot t_M))}{\sqrt{t_M}} \quad\text{and}\quad
B^+_1(t) \eqdef \frac{M - W(t_M + t(1 - t_M))}{\sqrt{1-t_M}}\cdot
\end{align*}}%
\noindent If $\overset{\mathcal{L}}{=}$ denotes the equality in distribution, then Theorem 1 of 
\textcyrillic{\citet{denisov1984random}}
\selectlanguage{english}
asserts that 
{\small\[B^+ \overset{\mathcal{L}}{=} B^+_0 \overset{\mathcal{L}}{=} B^+_1 \text{ and }t_M\text{ is independent from both }B^+_0\text{ and }B^+_1.\]}%
\noindent
We upper-bound the expected number of $\eta$-near-optimal points
 for any integer $h \ge 0$ and any $\eta > 0$, 
{\small\begin{align*}
&\EE{\no{\eta}{h}} = \EE{\sum_{k=0}^{2^h} \bOne\ac{W\left(\frac{k}{2^h}\right) > M - \eta}}
= \sum_{k=0}^{2^h} \EE{\bOne\ac{W\left(\frac{k}{2^h}\right) > M - \eta}}\\
&= \sum_{k=0}^{2^h} \EE{\bOne\ac{\ac{W\left(\frac{k}{2^h}\right) > M - \eta \cap \frac{k}{2^h} \le t_M} \cup \ac{W\left(\frac{k}{2^h}\right) > M - \eta \cap \frac{k}{2^h} > t_M}}}\\
&= \sum_{k=0}^{2^h} \EE{\bOne\ac{B^+_0\pa{1\! -\! \frac{k}{t_M2^h}}\! < \!\frac{\eta}{\sqrt{t_M}} \cap \frac{k}{2^h}\!\le\! t_M}} 
\!+ \!\sum_{k=0}^{2^h} \EE{\bOne\ac{B^+_1\pa{\frac{k/2^h\! -\! t_M}{1 - t_M}} \!<\! \frac{\eta}{\sqrt{1 \!-\! t_M}} \cap \frac{k}{2^h} \!>\! t_M}}.
\end{align*}}%
\noindent If $X$ and $Y$ are independent then for any function $f$, 
{\small\begin{align*}
\EE{f(X,Y)} &= \EE{\EE{f(X,Y) | Y}} & \text{(law of total expectation)}\\
&\le \sup\textstyle\!_{y}\,\EE{f(X,y) | Y=y}  & \text{(for any $Z$, $\EE{Z} \le \sup\textstyle\!_{w}\,Z(w)$)}\\
&= \sup\textstyle\!_y\,\EE{f(X,y)}. & \text{(because $X$ and $Y$ are independent)}
\end{align*}}%
Since $t_M$ is independent from $B^+_0$ and $B^+_1$, then using the above with $X\! =\! (B^+_0, B^+_1), Y = t_M$, and 
{\small
\[ 
f:(x_0, x_1), y \rightarrow \sum_{k=0}^{2^h}\pa{ \bOne\ac{x_0\pa{1\! -\! \frac{k}{t_M2^h}}\! <\! \frac{\eta}{\sqrt{y}} \cap \frac{k}{2^h}\le y}\! +\! \bOne\ac{x_1\pa{\frac{k/2^h\! -\! t_M}{1\! -\! y}} \!< \!\frac{\eta}{\sqrt{1\! -\! y}} \cap \frac{k}{2^h}\! >\! y}}\CommaBin\]
}%
we can claim that
{\small
\begin{align*}
&\EE{\no{\eta}{h}} = \EE{f(X,Y)} \le \sup_{u\in[0,1]} \EE{f(X,u)}\\
&\le \sup_{u\in[0,1]} \ac{\sum_{k=0}^{2^h} \EE{\bOne\ac{B^+_0\pa{1 - \frac{k}{u2^h}} < \frac{\eta}{\sqrt{u}} \cap\frac{k}{2^h}  \le u}}} 
\\ &\quad 
+ \sup_{u\in[0,1]} \ac{\sum_{k=0}^{2^h} \EE{\bOne\ac{B^+_1\pa{\frac{k/2^h - u}{1 - u}} < \frac{\eta}{\sqrt{1 - u}} \cap\frac{k}{2^h}  > u}}} \\
&= \sup_{u\in[0,1]} \ac{\sum_{k=0}^{\floor{u2^h}} \PP{B^+_0\pa{1 - \frac{k}{u2^h}} < \frac{\eta}{\sqrt{u}}}} 
+ \sup_{u\in[0,1]} \ac{\sum_{k=\ceil{u2^h}}^{2^h} \PP{B^+_1\pa{\frac{k/2^h - u}{1 - u}} < \frac{\eta}{\sqrt{1 - u}}}} \\
&= 2\sup_{u\in[0,1]} \ac{\sum_{k=0}^{\floor{u2^h}} \PP{B^+_0\pa{1 - \frac{k}{u2^h}} < \frac{\eta}{\sqrt{u}}}} 
=  2\!\!\sup_{u\in[0,1]} \ac{\alpha_1 + \alpha_2 + \alpha_3 + \alpha_4},
\end{align*}}
{\small\begin{align*}
\text{with\ }\alpha_1 &= \sum_{k=0}^{\floor{2^h\eta^2}} \PP{B^+_0\pa{1 - \frac{k}{u2^h}} < \frac{\eta}{\sqrt{u}}}, \quad
\alpha_2 = \sum_{k=\ceil{2^h\eta^2}}^{\floor{\frac{u2^h}{2}}} \PP{B^+_0\pa{1 - \frac{k}{u2^h}} < \frac{\eta}{\sqrt{u}}}\\
\alpha_3 = &\sum_{k=\ceil{\frac{u2^h}{2}}}^{\floor{u2^h}-\ceil{2^h\eta^2}} \PP{B^+_0\pa{1 - \frac{k}{u2^h}} < \frac{\eta}{\sqrt{u}}}, \text{ and }
\ \alpha_4 = \sum_{k=\floor{u2^h}-\floor{2^h\eta^2}}^{\floor{u2^h}} \PP{B^+_0\pa{1 - \frac{k}{u2^h}} < \frac{\eta}{\sqrt{u}}}\cdot
\end{align*}}%
Since a probability is always upper-bounded by 1, we bound $\alpha_1$ and  $\alpha_4$ both by $\eta^22^h$,
to get that $\alpha_1 + \alpha_4 \le 2\eta^22^{h}.$
\noindent
We now bound the remaining probabilities appearing in the above expression by integrating over the distribution function of Brownian meander \citep[Equation~1.1]{durrett1977weak},
{\small\begin{align*}
\PP{B^+_0\pa{t} < x} &= 2\sqrt{2\pi}\int_0^x \frac{y\exp\pa{-y^2/(2t)}}{t\sqrt{2\pi t}} \int_0^y \frac{\exp\pa{-w^2/(2(1-t))}}{\sqrt{2\pi (1-t)}}\diff w\diff y
\\
&\le \frac{2}{t\sqrt{2\pi t(1-t)}} \int_0^x y^2\exp\pa{-\frac{y^2}{2t}}\diff y\\
&\le \frac{2x^3}{3t\sqrt{2\pi t(1-t)}} 
\le  \frac{2x^3}{3t(1-t)\sqrt{2\pi t(1-t)}}
 = \frac{2}{3\sqrt{2\pi}}\pa{\frac{x}{\sqrt{t(1-t)}}}^{3}\!\!\CommaBin
\end{align*}}%
where the first two inequalities are obtained by upperbounding the terms $\exp\left(\cdot\right)$ by one. 
Now, we use the above bound to bound $\alpha_2 + \alpha_3$,
{\small
\begin{align*}
\alpha_2 + \alpha_3 &= \sum_{k=\ceil{2^h\eta^2}}^{\floor{\frac{u2^h}{2}}} \PP{B^+_0\pa{1 - \frac{k}{u2^h}} < \frac{\eta}{\sqrt{u}}} + \sum_{k=\ceil{\frac{u2^h}{2}}}^{\floor{u2^h}-\ceil{2^h\eta^2}} \PP{B^+_0\pa{1 - \frac{k}{u2^h}} < \frac{\eta}{\sqrt{u}}}\\
&\le \sum_{k=\ceil{2^h\eta^2}}^{\floor{\frac{u2^h}{2}}} \frac{2}{3\sqrt{2\pi}}\pa{\frac{\frac{\eta}{\sqrt{u}}}{\sqrt{1 - \frac{k}{u2^h}}\sqrt{\frac{k}{u2^h}}}}^3 + \sum_{k=\ceil{\frac{u2^h}{2}}}^{\floor{u2^h}-\ceil{2^h\eta^2}} \frac{2}{3\sqrt{2\pi}}\pa{\frac{\frac{\eta}{\sqrt{u}}}{\sqrt{1 - \frac{k}{u2^h}}\sqrt{\frac{k}{u2^h}}}}^3 \\
&\le \sum_{k=\ceil{2^h\eta^2}}^{\floor{\frac{u2^h}{2}}} \frac{1}{6\sqrt{\pi}}\pa{\frac{\frac{\eta}{\sqrt{u}}}{\sqrt{\frac{k}{u2^h}}}}^3 + \sum_{k=\ceil{\frac{u2^h}{2}}}^{\floor{u2^h}-\ceil{2^h\eta^2}} \frac{1}{6\sqrt{\pi}}\pa{\frac{\frac{\eta}{\sqrt{u}}}{\sqrt{1-\frac{k}{u2^h}}}}^3 \\
&\le \sum_{k=\ceil{2^h\eta^2}}^{\floor{\frac{u2^h}{2}}} \frac{1}{6\sqrt{\pi}}\pa{\frac{\frac{\eta}{\sqrt{u}}}{\sqrt{\frac{k}{u2^h}}}}^3 + 
\sum_{k=\ceil{2^h\eta^2}}^{\floor{u2^h} - \ceil{\frac{u2^h}{2}}} \frac{1}{6\sqrt{\pi}}\pa{\frac{\frac{\eta}{\sqrt{u}}}{\sqrt{\frac{\floor{u2^h}}{u2^h}+\frac{k}{u2^h}}}}^3\cdot 
\end{align*}}%
Changing the indexation as $k = -k' + \floor{u2^h}$, we get
{\small
\begin{align*}
\alpha_2 + \alpha_3 &\leq \frac{\pa{2^h\eta^2}^{3/2}}{6\sqrt{\pi}}
\pa{\sum_{k=\ceil{2^h\eta^2}}^{\floor{\frac{u2^h}{2}}}\frac{1}{k^{3/2}}+\sum_{k=\ceil{2^h\eta^2}}^{\floor{u2^h} - \ceil{\frac{u2^h}{2}}}\frac{1}{k^{3/2}}} \\
&\le \frac{\pa{2^h\eta^2}^{3/2}}{3\sqrt{\pi}}\sum_{k=\ceil{2^h\eta^2}}^{\infty}\frac{1}{k^{3/2}}
\le \frac{\pa{2^h\eta^2}^{3/2}}{3\sqrt{\pi}}\frac{3}{\sqrt{\ceil{2^h\eta^2}}} \le \frac{1}{\sqrt{\pi}}\eta^22^h \le \eta^22^h,
\end{align*}}%
where in the last line we used that for any $k_0 \ge 1$,
{\small\[\sum_{k=k_0}^{\infty}\frac{1}{k^{3/2}}
\le \frac{1}{k_0^{3/2}} + \sum_{k=k_0+1}^{\infty}\int_{k-1}^k\frac{1}{u^{3/2}}\diff u
=\frac{1}{k_0^{3/2}} + \int_{k_0}^{\infty}\frac{1}{u^{3/2}}\diff u 
= \frac{1}{k_0^{3/2}} + \frac{2}{\sqrt{k_0}} \le \frac{3}{\sqrt{k_0}}\cdot
\]}%
We finally have
{\small\begin{align*}
\forall u, \ \ \alpha_1 + \alpha_2 + \alpha_3 + \alpha_4 \le 3\eta^22^h
\end{align*}}%
 and therefore%
{\small\[\EE{\no{\eta}{h}} \le 6\eta^22^h.\]}%
\vskip -1em
\vspace{-1em}
\end{proof}
To conclude the analysis, we put Lemma~\ref{bound_n_with_no} and Lemma~\ref{bound_no} together in order to bound  the sample complexity conditioned on event $\Bevent$.
\begin{restatable}{lemma}{boundN}
\label{prop:bound_n}
There exists $c>0$ such that for all $\eps \le 1/2$, 
$\EE{\smp\ |\ \Bevent\,} \le c\log^2\pa{1/\epsilon}.$
\end{restatable}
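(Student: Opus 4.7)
The plan is to combine Lemma~\ref{bound_n_with_no} (the deterministic upper bound on $\smp$ under $\Bevent$) with Lemma~\ref{bound_no} (the expected number of near-optimal points) and then control the number of levels to sum over. More precisely, conditioning on $\Bevent$ and using Lemma~\ref{bound_n_with_no}, I would start from
\[
\EE{\smp \mid \Bevent} \;\le\; 2\sum_{h=0}^{h_{\max}} \EE{\no{\eta_h}{h} \mid \Bevent},
\]
where $\eta_h \triangleq \eta_\eps(1/2^h)$. To drop the conditioning I use that $\PP{\Bevent} \ge 1-\eps^5$ (Lemma~\ref{prop:proba_B}), so for any nonnegative random variable $X$, $\EE{X \mid \Bevent} \le \EE{X}/(1-\eps^5) \le 2\,\EE{X}$ for $\eps \le 1/2$. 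Applying Lemma~\ref{bound_no} gives
\[
\EE{\smp \mid \Bevent} \;\le\; 4 \sum_{h=0}^{h_{\max}} 6\,\eta_h^2\,2^h.
\]

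Next I plug in the explicit form of $\eta_\eps$. Since $\eta_h^2 = \tfrac{5}{2 \cdot 2^h}\ln(2^h/\eps)$, we get $\eta_h^2\,2^h = \tfrac{5}{2}\bigl(h\ln 2 + \ln(1/\eps)\bigr)$, hence the sum telescopes cleanly:
\[
\EE{\smp \mid \Bevent} \;\le\; 60 \sum_{h=0}^{h_{\max}} \bigl(h\ln 2 + \ln(1/\eps)\bigr) \;\le\; 30\ln 2 \cdot h_{\max}^2 + 60\,(h_{\max}+1)\ln(1/\eps).
\]

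The remaining ingredient is an upper bound on $h_{\max}$. By definition, $h_{\max}$ is the smallest integer with $\eta_\eps(1/2^h) \le \eps$, equivalently $\ln(2^h/\eps) \le \tfrac{2\eps^2 2^h}{5}$. A direct check shows that any $h$ with $2^h \ge C \eps^{-2}\ln(1/\eps)$ for a suitable absolute $C$ satisfies this inequality, so $h_{\max} \le \log_2(C\eps^{-2}\ln(1/\eps)) = 2\log_2(1/\eps) + O(\log\log(1/\eps))$, which in particular is $O(\log(1/\eps))$. Substituting this bound on $h_{\max}$ into the displayed inequality above gives $\EE{\smp \mid \Bevent} \le c\log^2(1/\eps)$ for some absolute constant $c>0$.

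The only mildly delicate step is the bound on $h_{\max}$: one must be careful that the termination condition involves $\eta_\eps$, which is neither monotone everywhere nor trivially invertible, so a short direct verification (rather than a clean closed form) is needed. Everything else is a routine computation: the conditioning is absorbed by a factor of $2$, the $h$-sum gives a quadratic in $h_{\max}$, and the bound $h_{\max} = O(\log(1/\eps))$ turns this into the desired $\log^2(1/\eps)$ rate.
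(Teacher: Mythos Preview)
Your proposal is correct and follows essentially the same approach as the paper: apply Lemma~\ref{bound_n_with_no} under $\Bevent$, remove the conditioning via $\PP{\Bevent}\ge 1-\eps^5\ge 1/2$, plug in Lemma~\ref{bound_no}, use that $\eta_h^2 2^h=\cO(\log(1/\eps))$ so each summand is $\cO(\log(1/\eps))$, and bound $h_{\max}=\cO(\log(1/\eps))$. Two cosmetic remarks: from the definition $\eta_\eps(\delta)=\sqrt{\tfrac{5\delta}{2}\ln(2/(\eps\delta))}$ you should get $\eta_h^2=\tfrac{5}{2\cdot 2^h}\ln(2^{h+1}/\eps)$ rather than $\ln(2^h/\eps)$, which is harmless; and for $\eps\le 1/2$, $\delta\le 1$ one has $2/(\eps\delta)\ge 4>e$, so $\eta_\eps$ is in fact increasing on the relevant range and your caveat about non-monotonicity is unnecessary.
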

\begin{proof}
By definition of $h_{\max}$,
{\small\begin{align*}
\epsilon^2 &\le \frac{5}{2\times2^{h_{\max}}}\ln\pa{\frac{2^{h_{\max}}}{\eps}}
\le \frac{5}{2}\frac{\sqrt{2^{h_{\max}}/\eps}}{2^{h_{\max}}}\CommaBin
\end{align*}}%
from which we deduce that
{\small\begin{align*}
\sqrt{\eps}\epsilon^2 \le \frac{5}{2\times2^{h_{\max}/2}} \text{\quad and therefore\quad} 2^{h_{\max}} \le \frac{25}{4\epsilon^5}\CommaBin
\end{align*}}%
\noindent which gives us an upper bound on $h_{\max}$,
{\small\begin{align*}
h_{\max} \le \frac{\ln(25/4)}{\ln2} + \frac{5\ln\pa{1/\eps}}{\ln2} = 
\cO\pa{\log\pa{1/\eps}}.
\end{align*}}%
\noindent
Furthermore, using Lemma~\ref{prop:proba_B}, we get that for any $\eps \le 1/2$, 
{\small\begin{align*}
\EE{\no{\eta_h}{h}} &= \EE{\no{\eta_h}{h} |\ \Bevent\,}\PP{\Bevent} 
+ \EE{\no{\eta_h}{h} |\ \Bevent^c}\PP{\Bevent^c} 
\ge \EE{\no{\eta_h}{h} |\ \Bevent\,}(1-\eps) \ge \frac{\EE{\no{\eta_h}{h}|\, \Bevent\,}}{2}\cdot
\end{align*}}%

\noindent 
We now use Lemma~\ref{bound_n_with_no} and Lemma~\ref{bound_no} to get 
{\small\begin{align*}
\EE{\smp | \ \Bevent\,} 
&\le 2\sum_{h=0}^{h_{\max}} \EE{\no{\sqrt{\frac{5}{2}\frac{\ln\pa{2^{h+1}/\eps}}{2^h}}}{h} \bigg|\ \Bevent\,}%
\le 60\sum_{h=0}^{h_{\max}} \ln\pa{2^{h+1}/\eps}\\
&= 60\sum_{h=1}^{h_{\max}+1} \pa{\ln\pa{1/\eps} + h\ln2}
= \cO\pa{h_{\max}^2 + h_{\max}\log\pa{1/\eps}}= \cO\pa{\log^2\pa{1/\eps}}.
\end{align*}}
\vskip -2em
\vspace{-1em}
\end{proof}
\noindent
We also bound the sample complexity on $\Bevent^c$, the event complementary to $\Bevent$, by the total number of possible intervals $[k/2^h, (k+1)/2^h]$ with $\eta_\eps\pa{1/2^h} \le \epsilon$. Then, we combine it  with Lemma~\ref{prop:proba_B}, Lemma~\ref{bound_n_with_no}, and Lemma~\ref{bound_no}  to get Proposition~\ref{thm:smp} which is the second part of the main theorem. 

\begin{restatable}{proposition}{thmSmp}
\label{thm:smp} There exists $c>0$ such that for all $\eps < 1/2$,
{\small\[ \EE{\smp} \le c\log^2\pa{1/\epsilon}.\]}
\end{restatable}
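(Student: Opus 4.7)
The plan is to decompose $\EE{\smp}$ via the law of total expectation with respect to $\Bevent$ and its complement, and then invoke the existing machinery to control each piece separately. Concretely, I would write $\EE{\smp} = \EE{\smp \mid \Bevent}\PP{\Bevent} + \EE{\smp \mid \Bevent^c}\PP{\Bevent^c}$ and bound each term. The first term is essentially handled by Lemma~\ref{prop:bound_n}, which already gives $\EE{\smp\mid \Bevent}\le c\log^2(1/\eps)$, so no new work is needed there beyond noting $\PP{\Bevent}\le 1$.

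For the second term, the key observation is that \OOB only ever queries midpoints of dyadic intervals $[k/2^h,(k+1)/2^h]$, and the termination test $\eta_\eps(b-a)\le\eps$ prevents the algorithm from ever splitting an interval of length $\le 1/2^{h_{\max}}$. Consequently, even on the bad event $\Bevent^c$, the algorithm cannot make more evaluations than the total number of dyadic endpoints up to level $h_{\max}$, which gives the deterministic bound
\[\smp \;\le\; \sum_{h=0}^{h_{\max}}2^h \;\le\; 2^{h_{\max}+1}.\]
From the computation already performed in the proof of Lemma~\ref{prop:bound_n} we have $2^{h_{\max}}\le 25/(4\eps^5)$, and Lemma~\ref{prop:proba_B} gives $\PP{\Bevent^c}\le \eps^5$. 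Multiplying these two estimates shows
\[\EE{\smp\mid \Bevent^c}\,\PP{\Bevent^c}\;\le\; \frac{25}{2\eps^5}\cdot\eps^5 \;=\; \frac{25}{2},\]
a universal constant, which is absorbed into the $\cO\pa{\log^2(1/\eps)}$ term.

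Combining these two bounds yields $\EE{\smp}\le c\log^2(1/\eps) + 25/2\le c'\log^2(1/\eps)$ for a sufficiently large constant $c'$ (using $\eps<1/2$, so $\log(1/\eps)$ is bounded below). I do not expect any real obstacle in this argument: the calibration $\PP{\Bevent^c}\le\eps^5$ in Lemma~\ref{prop:proba_B} was clearly chosen so that its fifth-power decay exactly cancels the worst-case $\cO(1/\eps^5)$ number of dyadic intervals at depth $h_{\max}$. The only mildly delicate point is making sure the deterministic worst-case bound on $\smp$ on $\Bevent^c$ uses the same $h_{\max}$ as appears in the analysis on $\Bevent$, which is immediate since $h_{\max}$ is determined solely by $\eps$ (through the termination rule) and not by the Brownian realization.
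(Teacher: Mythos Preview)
Your proposal is correct and follows essentially the same approach as the paper: decompose via the law of total expectation on $\Bevent$, invoke Lemma~\ref{prop:bound_n} for the first term, and on $\Bevent^c$ use the deterministic bound $\smp\le 2^{h_{\max}+1}=\cO(\eps^{-5})$ together with $\PP{\Bevent^c}\le\eps^5$ from Lemma~\ref{prop:proba_B} to get an $\cO(1)$ contribution. Your commentary on why the exponent $5$ was chosen is exactly the point.
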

\noindent
\begin{proof} From the law of total expectation,
{\small\begin{align*}
\EE{\smp} = \EE{\smp|\ \Bevent\,}\PP{\Bevent} + \EE{\smp|\Bevent^c}\PP{\Bevent^c} 
\le \EE{\smp|\ \Bevent\,} + \eps^5\EE{\smp|\Bevent^c}.
\end{align*}}%
By Lemma~\ref{prop:bound_n}, we have that $\EE{\smp|\ \Bevent\,} = \cO\pa{\log^2\pa{1/\eps}}.$
Now let $h_{\max}$ be the maximum $h$ at which points are evaluated. Then, 
{\small\[
\smp \le \sum_{h=0}^{h_{\max}}2^h
= 2^{h_{\max}+1} - 1 \le 2^{h_{\max}+1}.
\]}%
\noindent
As in the proof of Lemma~\ref{prop:bound_n}, we get $2^{h_{\max}+1} = \cO\pa{\epsilon^{-5}}$. We finally obtain that
{\small\[\EE{\smp} \le \cO\pa{\log^2\pa{1/\eps}} + \cO\pa{1} = \cO\pa{\log^2\pa{1/\eps}}.\]}%
\vskip -1em
\vspace{-1em}
\end{proof}
Proposition~\ref{thm:smp} together with Proposition~\ref{thm:Pr} establish the proof of Theorem~\ref{thm}, the holy grail.
\section{Numerical evaluation of \OOB}
For an illustration, we ran a simple experiment and for different values of $\eps$, we computed the average empirical sample complexity $N_\eps$ on 250 independent runs that you can see on the left plot. We also plot one point for each run of \OOB  instead of averaging the sample complexity, to be seen on the right. 
The experiment indicates a linear dependence between the sample complexity and $\log^2(1/\eps)$.  
\begin{figure}[H]
\begin{center}
\vspace{-0.4cm}
\includegraphics[width=0.49\columnwidth]{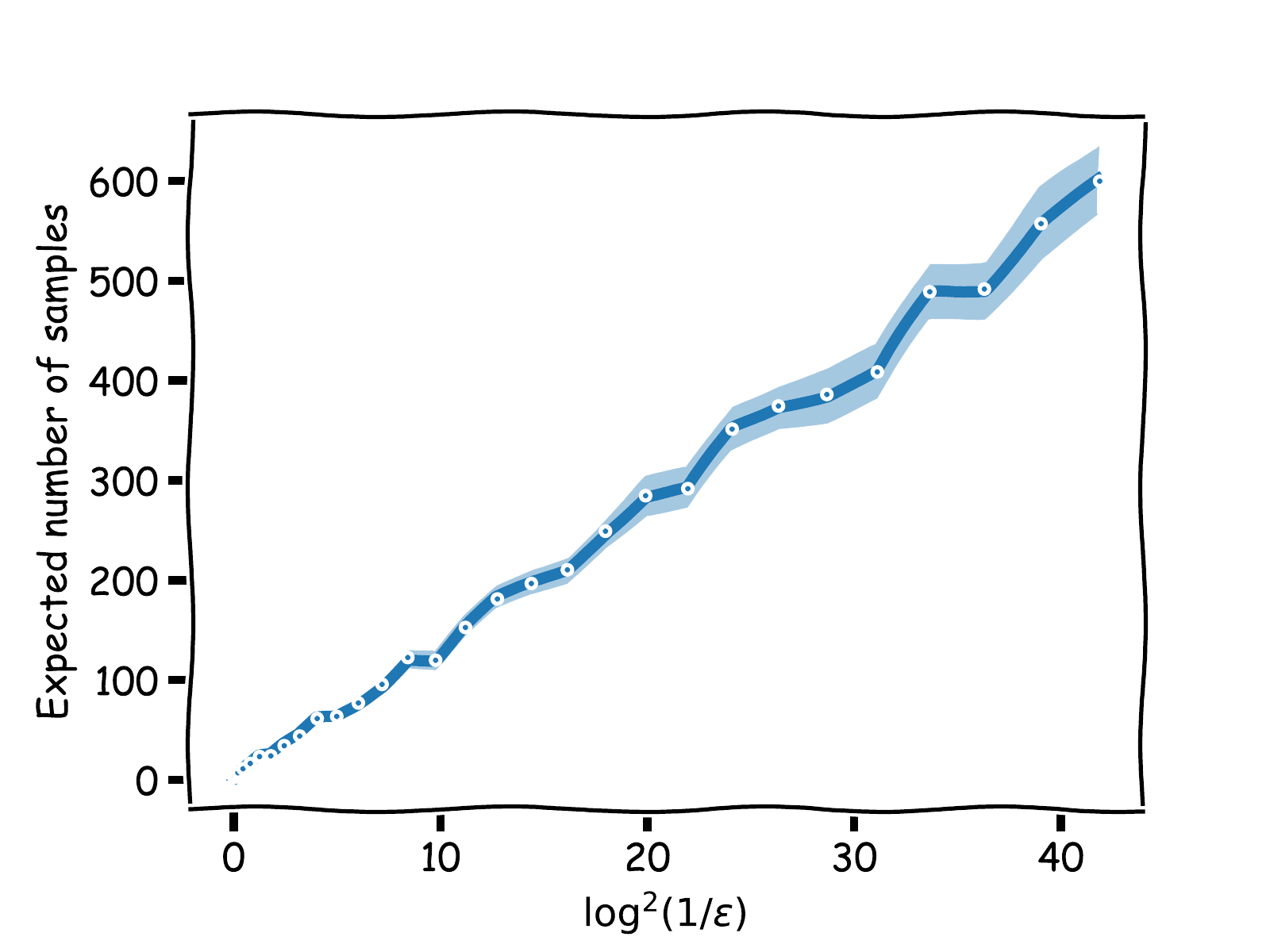}
\includegraphics[width=0.49\columnwidth]{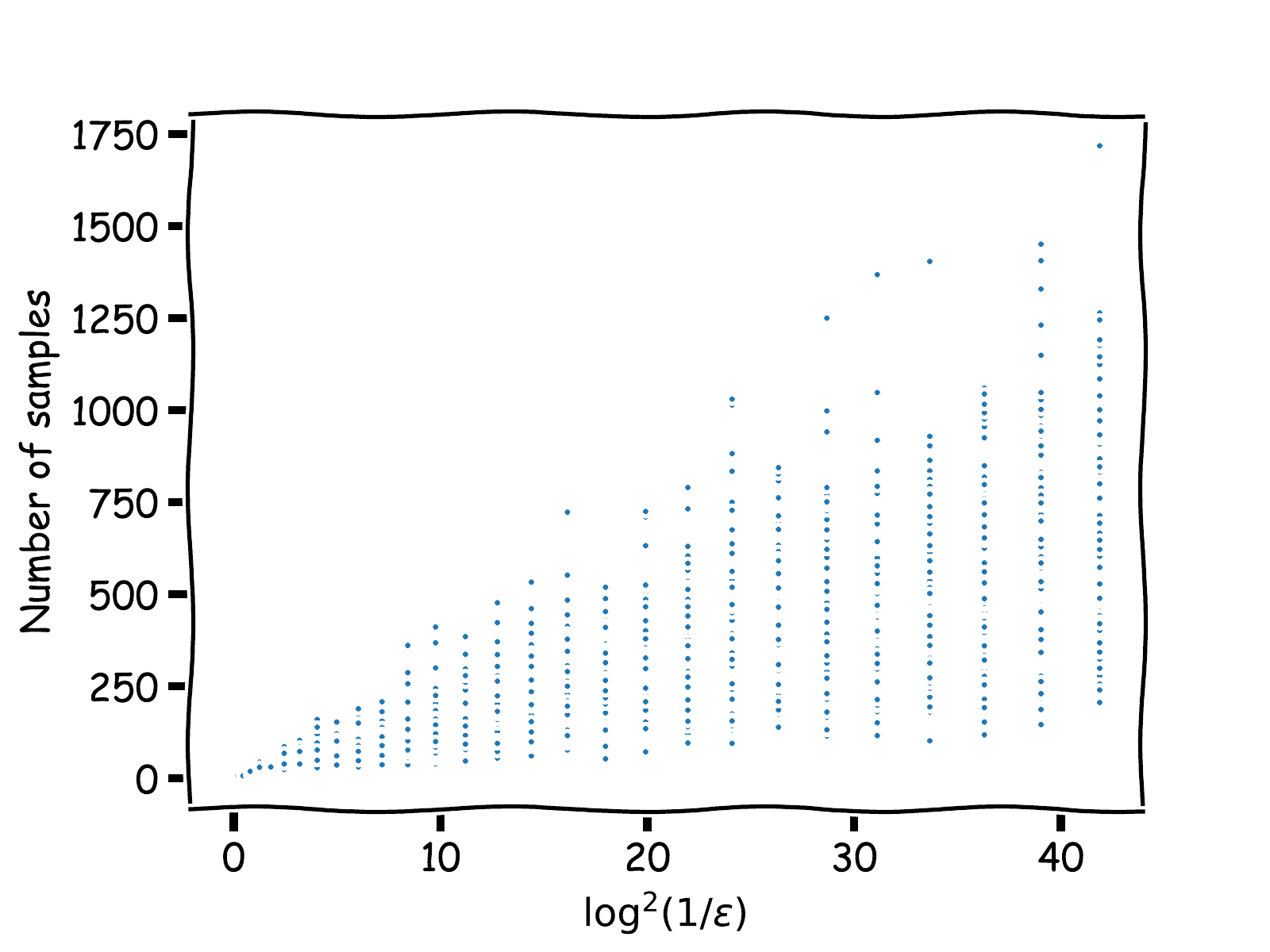}
\end{center}
\end{figure}


\section{Conclusion and ideas for extensions and generalizations}
We presented \OOB, an algorithm inspired by \DOO \citep{munos2011optimistic} that efficiently optimizes a Brownian motion. We proved that the sample complexity of \OOB~is of order $\cO\pa{\log^2\pa{1/\eps}}$ which improves over the previously best known bound \citep{calvin2017adaptive}. 
As we are not aware of a lower bound for the Brownian motion optimization, we do not know whether $\cO\pa{\log^2\pa{1/\eps}}$ is the minimax rate of the sample complexity or if there exists an algorithm that can do even better. 

What would be needed to do if we wanted to use our approach for Gaussian processes with a different kernel?
The optimistic approach we took is quite general and only Lemma~\ref{lem:boundNOpoints} would need additional work
as this is the ingredient most specific to Brownian motion. Notice, that Lemma~\ref{lem:boundNOpoints} bounds the number of near-optimal nodes of a Brownian motion in expectation. To bound the expected number of near-optimal nodes, we use the  result of 
\textcyrillic{\citet{denisov1984random}} which is based on 2 components:
\renewcommand{\labelenumi}{\SebastianoItem{\arabic{enumi}}}
\begin{enumerate}
\item
A Brownian motion can be rewritten as an affine transformation of a Brownian motion conditioned to be positive, translated by an (independent) time at which the Brownian motion attains its maximum. 
\item
A Brownian motion conditioned to be positive is a Brownian meander. It requires some additional work to prove that a Brownian motion conditioned to be positive is actually properly defined. 
\end{enumerate}

A similar result for another Gaussian process or its generalization of our result to a larger class of Gaussian processes would need to adapt or generalize these two items in  Lemma~\ref{lem:boundNOpoints}. On the other hand, the adaptation or generalization of the other parts of the proof would be straightforward.  Moreover, for the second item, the full law of the process conditioned to be positive is actually not needed, only the local time of the Gaussian process conditioned to be positive at points near zero.
\clearpage
\paragraph{Acknowledgements} 
	This research  was supported by European CHIST-ERA project DELTA, French Ministry of Higher Education and Research, Nord-Pas-de-Calais Regional Council, Inria and Otto-von-Guericke-Universit\"at Magdeburg associated-team north-European project Allocate, French National Research Agency projects ExTra-Learn (n.ANR-14-CE24-0010-01) and BoB (n.ANR-16-CE23-0003), FMJH Program PGMO with the support to this program from Criteo, a doctoral grant
of \' Ecole Normale Sup\' erieure, and Maryse \& Michel Grill.

%





\end{document}